\newcommand{\dd}{\mathrm{d}}
\newcommand{\kl}{\mathrm{KL}}
\newcommand{\iidsim}{\overset{\text{iid}}{\sim}}
\DeclareMathOperator*{\argmin}{arg\,min}
\newtheorem{theorem}{Theorem}
\newtheorem{definition}{Definition}
\newtheorem*{assumption}{Assumptions}
\title{A PAC-Bayesian Perspective on the \\ Interpolating Information Criterion}
\author{%
  Liam Hodgkinson \\
  School of Mathematics and Statistics\\
  University of Melbourne\\
  \texttt{lhodgkinson@unimelb.edu.au} \\
  \And
  Chris van der Heide \\
  Department of Electrical and Electronic Engineering \\
  University of Melbourne\\
  \texttt{chris.vdh@gmail.com}\\
  \And
  Robert Salomone \\
  Centre for Data Science \\
  Queensland University of Technology \\
  \texttt{robert.salomone@qut.edu.au} \\
  \And
  Fred Roosta \\
  CIRES and School of Mathematics and Physics \\
  University of Queensland \\
  \texttt{fred.roosta@uq.edu.au}\\
  \And
  Michael W. Mahoney \\
  ICSI, LBNL, and Department of Statistics \\
  University of California, Berkeley \\
  \texttt{mmahoney@stat.berkeley.edu}
}
\begin{document}

\maketitle

\begin{abstract}
Deep learning is renowned for its theory--practice gap, 
whereby principled theory typically fails to provide much beneficial guidance for implementation in practice. 
This has been highlighted recently by the benign overfitting phenomenon: when neural networks become sufficiently large to interpolate the dataset perfectly, model performance appears to improve with increasing model size, in apparent contradiction with the well-known bias--variance tradeoff. 
While such phenomena have proven challenging to theoretically study for general models, the recently proposed Interpolating Information Criterion (IIC) provides a valuable theoretical framework to examine performance for overparameterized models. 
Using the IIC, a PAC-Bayes bound is obtained for a general class of models, characterizing factors which influence generalization performance in the interpolating regime.
From the provided bound, we quantify how the test error for overparameterized models achieving effectively zero training error depends on the quality of the implicit regularization imposed by e.g. the combination of model, optimizer, and parameter-initialization scheme; the spectrum of the empirical neural tangent kernel; curvature of the loss landscape; and noise present in the data.
\end{abstract}

\section{Introduction}

A prominent curiosity in modern machine learning is the occurrence of strong generalization performance, even in the \emph{overparameterized} setting where the number of parameters exceeds the size of the training set and models can \emph{interpolate} even noisy data \cite{belkin2019reconciling,zhang2021understanding}. 
This is at odds with classical theoretical arguments in line with the bias--variance tradeoff, as interpolators are typically thought to correspond to high-variance estimators in the presence of data noise, and therefore should perform poorly \cite[\S2.9]{hastie2009elements}. 
Such observations have sparked renewed interest in \emph{interpolating estimators} and the occurrence of \emph{benign overfitting} \cite{bartlett2020benign,frei2023benign,tsigler2023benign}. 
One of the more celebrated realizations of benign overfitting is the \emph{double descent} curve particularly pronounced in linear regression \cite{belkin2019reconciling,hastie2022surprises,derezinski2020exact,liao2020random}, where %
model mean-squared error \emph{decreases} monotonically in the overparameterized regime. 
This surprising behaviour arises due to the \emph{implicit regularization} present in the choice of estimator \cite{neyshabur2017implicit}.
However, rigorous theoretical examination of these phenomena beyond the linear setting remains a significant challenge. 
For example, analogous curves can become arbitrarily complicated in the kernel regression setting~\cite{chen2021multiple,liang2020just,liu2021kernel}.

The problem of \emph{model selection} becomes exacerbated in the overparameterized setting: how do we compare between interpolators?
Classically, model selection is conducted using an \emph{information criterion}, the most prominent of which are the AIC and BIC~\cite{konishi2008information}, although these all break down for overparameterized models.
One recent approach to model selection in the \emph{general} overparameterized setting is presented in \cite{hodgkinson2023interpolating} with the \emph{Interpolating Information Criterion} (IIC).\footnote{This is related to but substantially more general than previous work in the linear/kernel setting~\cite{hodgkinson2022monotonicity}.}
Adopting a Bayesian setup, performance for the IIC is measured in terms of the \emph{marginal likelihood}. 
Similar to \cite{hanin2023bayesian}, the interpolating regime is examined through the \emph{cold posterior} scenario, where the temperature of the likelihood is decreased to concentrate posterior mass onto the zero-loss set of parameters.
The IIC itself relies on a novel (and broadly applicable) principle of \emph{Bayesian duality} \cite{hodgkinson2023interpolating}: for any overparameterized model, there exists a corresponding underparameterized model with the same marginal likelihood. 
Conveniently, this corresponding underparameterized model is often amenable to asymptotic approximations via Laplace's method, resulting in a tractable form of the marginal likelihood, even for complex models. 

The IIC is theoretically interesting, %
but its utility may not be immediately obvious.
While the marginal likelihood is a standard in Bayesian statistics, it is not often a metric of choice for machine learning practitioners.
Several deficiencies in the marginal likelihood have been raised as detrimental to accurate examination of model quality \cite{lotfi2022bayesian}.
Instead, a more popular framework for assessing model performance using Bayesian ideas is that of \emph{PAC-Bayes bounds} \cite{alquier2021user}. 
These bounds on the true risk are often more straightforward to interpret in practice, and provide the tightest estimates of the test error to date \cite{dziugaite2017computing,lotfi2022pac}. %
However, PAC-Bayes bounds are often limited by their requirement of a tractable choice of prior. Hence, previous bounds have only been capable of revealing coarse attributes (such as norm-based metrics \cite{neyshabur2017exploring,neyshabur2017pac}) linked to generalization through specific choices of the prior \cite{jiang2019fantastic}. 

Using techniques from the derivation of the IIC in \cite{hodgkinson2023interpolating}, we construct a PAC-Bayes bound that holds for a very wide class of models \emph{and} priors in the general overparameterized setting. 
In doing so, we provide a precise and \emph{complete} characterization of how model performance for interpolators depends on the quality of the implicit regularization, the sharpness of the model about the estimator, the curvature of the zero-loss region in the loss landscape, and the noise of the data. 
While earlier attempts have been made to develop PAC-Bayesian generalization bounds in the cold posterior setting \cite{pitas2022cold}, these are again limited by strong simplifying assumptions. 
In constrast, our PAC-Bayes bound holds for a general class of interpolators, with minimal assumptions on the regularity of the model.

\section{Interpolating Regime}

Parameter estimators for regression problems are typically minimizers of an empirical risk $L_n$:
\[
\theta^\star \in \mathcal{M} \coloneqq \argmin_{\theta \in \Theta} L_n(\theta),\quad \text{where} \quad L_n(\theta) = \frac1n\sum_{i=1}^n \ell(f(x_i,\theta),y_i),
\]
where $x_1,\dots,x_n \in \mathcal{X}$ are inputs, $y_1,\dots,y_n \in \mathbb{R}^m$ are the corresponding outputs, and $\Theta \subset \mathbb{R}^d$ is the parameter space. For example, in deep learning, $f:\mathcal{X} \times \mathbb{R}^d \to \mathbb{R}^m$ prescribes a (nonlinear) neural network architecture with $d$ weights and $m$ outputs over the input space $\mathcal{X}$. For simplicity, assume $\Theta = \mathbb{R}^d$ and restrict our attention to the mean-squared loss $\ell(z,y) = \|z - y\|^2$, although more general loss functions can also be considered. 

When the number of parameters $d$ exceeds the size of the dataset $mn$ and the model can \emph{interpolate} the data exactly, $\mathcal{M}$ is often uncountable. So, which $\theta \in \mathcal{M}$ should be chosen? A convenient approach is to select an estimator within $\mathcal{M}$ that is the solution to a constrained optimization problem involving a regularizer $R$ \cite{belkin2021fit}. 
\begin{definition}
\label{def:Interpolator}
An interpolator is an estimator of the form
$\theta^\star = \argmin_{\theta \in \Theta} R(\theta)$ subject to $f(x_i,\theta) = y_i$ for all $i=1,\dots,n$, where $R:\mathbb{R}^d \to \mathbb{R}$. 
\end{definition}
We assume that $R$ has both a unique minimizer over $\mathbb{R}^d$ \emph{and} a unique minimizer over $\mathcal{M}$. 
As observed in \cite{hodgkinson2023interpolating}, interpolators as prescribed in Definition \ref{def:Interpolator} arise naturally in the Bayesian context, as we now demonstrate. 
To start, consider the usual Bayesian posterior $\rho_\gamma(\theta) \propto \exp(-\tfrac1\gamma L_n(\theta)) \pi(\theta)$  formed from the Gibbs likelihood with temperature $\gamma$ and prior $\pi$. 
In the limit as $\gamma \to 0^+$, $\rho_\gamma$ will concentrate around regions where $L_n(\theta)$ is minimized, namely $\mathcal{M}$\footnote{This limiting behaviour in the posterior was quantified and investigated in \cite{de2021quantitative}.}.
This is the \emph{cold posterior} setting, which has surprisingly been observed to enhance predictive performance \cite{wenzel2020good}. In this setting, the role of $\pi$ in prescribing mass to estimators on $\mathcal{M}$ is enhanced. %
If we now choose $\pi(\theta) \propto \exp(-\frac1\tau R(\theta))$ to be the Gibbs measure corresponding to $R$ with temperature $\tau$, then regions with high probability under $\pi$ correspond to smaller values of $R$. 
In this way, $R$ acts as a regularizer over the set of interpolators. 
By taking $\tau \to 0^+$, the cold posterior concentrates on the minimizer $\theta^\star$ of $R$ on $\mathcal{M}$. 

The order of the limits ($\gamma \to 0^+$, and then $\tau \to 0^+$) is of great importance, as the limiting behaviour of $\rho_\gamma$ varies greatly depending on the relative rates with which these two limits are taken \cite{fulks1951generalization}. 
For example, if $\gamma$ and $\tau$ reduce at the same rate, then $\rho_\gamma$ concentrates around a \emph{maximum a posteriori} (MAP) estimator, and if $\tau \to 0^+$ first, then $\rho_\gamma$ concentrates on the unique minimizer of $R$.

In modern machine learning, the limit $\gamma \to 0^+$ corresponds to the procedure of optimizing the model to zero empirical risk. These asymptotics can be equally viable for models which achieve \emph{sufficiently small} training error. If $\theta^\star$ represents the trained weights, then $R$ is the implicit regularizer of the model, comprising all the factors (choice and hyperparameters of the optimizer, initialization, etc.) which dictate the particular solution reached at the end of training. By examining the true risk over the posterior $\rho_\gamma$ under the limits $\gamma \to 0^+$, and then $\tau \to 0^+$, we reveal a localized estimate of test error for large neural networks at the end of training. 
Within $\mathcal{M}$, $R$ is the primary measurement distinguishing between estimators, and plays an analogous role to the ``risk'' in the bound to~follow.

\section{PAC-Bayes Bounds}

Performance in machine learning is typically analyzed through the \emph{true risk function}, $L(\theta)$. 
Assuming that each $(x_i,y_i)$ is an iid realization from a distribution $\mathcal{D}$, we let $L(\theta) = \mathbb{E}_{(x,y)\sim \mathcal{D}} \ell(f(x,\theta),y)$. 
The difference between $L_n(\theta)$ and $L(\theta)$ is referred to as \emph{generalization error}.
A small value for the error for well-trained models is typically thought to be indicative of good real-world performance, and hence high model quality. 
A Bayesian analogue of the PAC framework, called \emph{PAC-Bayes theory}, was first introduced by McAllester~\cite{mcallester1999some} and has become recognized as a promising approach for potentially-practical non-vacuous bounds on the generalization error \cite{dziugaite2017computing}. 
Following \cite{begin2016pac}, and letting $\kl(\cdot\Vert\cdot)$ denote the Kullback--Leibler divergence, the Donsker--Varadhan change of measure theorem \cite[Lemma 3]{begin2016pac} applied to two probability measures, $\rho$ and $\pi$, states that
\[
\mathbb{E}_{\theta\sim\rho}\phi(\theta)\leq\kl(\rho\Vert\pi)+\log\mathbb{E}_{\theta\sim\pi}e^{\phi(\theta)},\qquad\mbox{for any }\phi:\Theta\to\mathbb{R}.
\]
If $\phi$ is $n$-times the generalization error, then using Markov's inequality, with probability at least $1-\delta$ over the choice of $(x_i,y_i) \iidsim \mathcal{D}$, it holds that
\begin{equation}
\label{eq:PACBayesCore}
\mathbb{E}_{\theta\sim\rho}L(\theta)\leq\mathbb{E}_{\theta\sim\rho}L_{n}(\theta)+\frac{1}{n}\left[\kl(\rho\Vert\pi)+\log\mathbb{E}_{(x,y)\sim\mathcal{D}}\zeta_{n}+\log(1/\delta)\right],
\end{equation}
where $\zeta_n = \mathbb{E}_{\theta\sim\pi} e^{n (L(\theta)-L_n(\theta))}$ encodes dispersion in the loss under the prior due to noise in the data. While (\ref{eq:PACBayesCore}) holds for arbitrary measures $\rho$ and $\pi$, the bound is tightest when $\rho$ is the posterior $\rho_\gamma$ with $\gamma=1/n$ \cite[\S2.1]{alquier2021user}.
Equation (\ref{eq:PACBayesCore}) is the core PAC-Bayes bound: to minimize the true risk, one should optimize over the empirical risk, and choose a prior that is as close to the posterior as possible. 
Effective choices of priors have resulted in non-vacuous generalization bounds, even for moderately large-scale neural networks \cite{dziugaite2017computing,lotfi2022pac}. 
However, the Kullback--Leibler term is often intractable for arbitrary priors, and so $\pi$ is typically chosen to render the right-hand side explicitly computable. There are two major issues with this: (i) the true role of the implicit regularization---believed to be \emph{critical} \cite{neyshabur2017implicit,zhang2021understanding}---remains opaque, as only a simplified version of this regularization can be examined; and (ii) the bound can only be as tight as one can approximate the optimal choice of prior. %

An alternative approach is to trade strict upper bounds for \emph{asymptotics} in the interpolating regime using the techniques of \cite{hodgkinson2023interpolating}, along with the observations of \cite{germain2016pac}. By doing so, a tractable PAC-Bayes bound is obtained for almost any choice of prior, opening the door to a more precise theoretical understanding of regularization, and potentially tighter bounds.
The resulting PAC-Bayes bound depends on the performance of the interpolator $\theta^\star$ under the regularizer $R$, and it makes explicit the dependence of model performance on three key factors:
\begin{itemize}[leftmargin=*]
\item \textbf{Sharpness:} $S = \log \det (DF(\theta^\star) DF(\theta^\star)^\top)$; where $DF(\theta)$ is the $nm\times d$ Jacobian of $F$ with rows $(\nabla_\theta F(x_i,\theta))_{i=1}^n$. Sharpness measures are well-known to (sometimes~\cite{yao2020pyhessian}) correlate with performance \cite{hochreiter1997flat,keskar2016large}. 
Note that $S$ is the log-determinant of the \emph{empirical neural tangent kernel} (NTK) \cite{jacot2018neural,novak2022fast}.
\item \textbf{Dispersion:} $P=2n^{-1} \log \mathbb{E}_{(x,y)\sim\mathcal{D}}e^{n(L(\theta_{0})-L_{n}(\theta_{0}))}$; where $\theta_0 \in \mathbb{R}^d$ is the assumed global minimizer of the regularizer $R$. 
Note that if $\ell(f(x,\theta_0),y)$ is normally distributed, then $P$ is its variance. 
However, as $P$ does not depend on the posterior, and so plays a limited role in our bound.
\item \textbf{Curvature:} $K = \log \det_+ \nabla_{\mathcal{M}}^2 R(\theta^\star) - \log \det \nabla^2 R(\theta_0)$; where $\det_+$ is the pseudo-determinant (product of all non-zero eigenvalues) and $\nabla_{\mathcal{M}}^2$ is the manifold Hessian over $\mathcal{M}$ \cite[\S5.5]{absil2008optimization}. The manifold Hessian over $\mathcal{M}$ is well-defined according to the Implicit Function Theorem, which asserts that $\mathcal{M}$ is a submanifold of dimension $d - mn$ if $DF$ is continuous and full rank on $\mathbb{R}^d$.
\end{itemize}
The interpolating information criterion as presented in \cite{hodgkinson2023interpolating} is given in terms of these factors as
\begin{equation}
\label{eq:IICOrig}
\text{IIC} = \log [R(\theta^\star)-R(\theta_0)] + \frac{S + K}{mn} - \log(mn),
\end{equation}
where a smaller IIC is indicative of better model performance. For more details on the nature of these factors, see \cite[\S5.1]{hodgkinson2023interpolating}.
Following \cite{hodgkinson2023interpolating}, our analysis operates under the following conditions. Of these, (F) is perhaps the most unusual condition, but is necessary to ensure that $\mathbb{E}_{\theta \sim \rho_\gamma} L(\theta) \neq 0$.
\begin{assumption}
Assume the following conditions:\vspace{-.2cm}
\begin{enumerate}[label=(\Alph*),leftmargin=1cm]
\itemsep0em 
\item $F$ and $R$ are $\mathcal{C}^{\infty}$-smooth on $\mathbb{R}^d$
\item $\mathcal{M}$ is non-empty, and $DF(\theta)$ is full rank for all $\theta \in \mathbb{R}^d$
\item the function $\theta \mapsto \pi(\theta) \det(DF(\theta)DF(\theta)^\top)^{-1/2}$ is integrable over $\mathbb{R}^d$
\item the manifold Hessian $\nabla_{\mathcal{M}}^2 R(\theta^\star)$ is non-singular
\item $R(\theta) \leq M\|\theta\|^p$ for all $\theta \in \mathbb{R}^d$ for some $M,p > 0$
\item the normalizing constant for $\rho_\gamma$ is bounded as $\gamma \to 0^+$
\item $R(\theta^\star)$ is uniformly bounded for any $n=1,2,\dots$
\end{enumerate}
\end{assumption}

With these assumptions, we can establish the following theorem, our main result, the proof of which can be found in Appendix~\ref{sxn:appendix}.
\begin{theorem}[PAC-Bayes Bound for Interpolators]
\label{thm:PAC}
Consider the cold posterior $\rho_\gamma$ with prior $\pi(\theta) \propto \exp(-\frac1\tau R(\theta))$ under the choice of temperature $\tau = \frac{2}{mn} [R(\theta^\star) - R(\theta_0)]$. Then for any $0 < \delta < 1$, with probability at least $1-\delta$, as $\gamma \to 0^+$, and then $n \to \infty$,
\begin{align}
2\mathbb{E}_{\theta\sim\rho_\gamma}L(\theta)	&\leq m \log(R(\theta^{\ast})-R(\theta_{0}))+\frac{1}{n}S+\frac{1}{n}K+P \nonumber \\
&+m\left(1-\log\frac{mn}{2\pi}\right)+\frac{1}{n}\log\left(\delta^{-2}\right)+\mathcal{O}(n^{-2})+\mathcal{O}(\gamma)  . \label{eq:PACBound}
\end{align}
\end{theorem}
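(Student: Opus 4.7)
The plan is to specialize the core PAC-Bayes bound (\ref{eq:PACBayesCore}) to $\rho = \rho_\gamma$ with the stated prior $\pi$, and use the Gibbs-posterior identity $\kl(\rho_\gamma \Vert \pi) = -\gamma^{-1}\mathbb{E}_{\rho_\gamma} L_n - \log Z_\gamma$, where $Z_\gamma = \int e^{-L_n(\theta)/\gamma}\pi(\theta)\,\dd\theta$ is the marginal likelihood, to replace the intractable KL. This reduces the proof to asymptotic evaluation of $-\log Z_\gamma$, $\mathbb{E}_{\rho_\gamma} L_n$, and $\log\mathbb{E}_\mathcal{D} \zeta_n$ as $\gamma \to 0^+$ and then $n \to \infty$, at the stated choice of $\tau$.

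The centrepiece of the argument is the \emph{Bayesian duality} of \cite{hodgkinson2023interpolating}: setting $u = F(\theta) - Y \in \mathbb{R}^{mn}$ so that $L_n(\theta) = \|u\|^2/n$, the coarea formula yields
\[
Z_\gamma = \int_{\mathbb{R}^{mn}} e^{-\|u\|^2/(n\gamma)}\, g_\tau(u)\,\dd u, \quad g_\tau(u) = \int_{F^{-1}(u+Y)} \pi(\theta)\det(DF(\theta)DF(\theta)^\top)^{-1/2}\,\dd V(\theta).
\]
As $\gamma \to 0^+$, a Gaussian Laplace expansion in $u$ gives $Z_\gamma \sim (\pi n\gamma)^{mn/2} g_\tau(0)$, and a second Laplace expansion of $g_\tau(0)$ on the $(d-mn)$-dimensional submanifold $\mathcal{M}$ around the minimizer $\theta^\star$ of $R|_\mathcal{M}$ would pick up the sharpness $S$ (from the coarea Jacobian at $\theta^\star$) together with the $\det_+ \nabla_\mathcal{M}^2 R(\theta^\star)$ part of the curvature $K$. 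An ordinary Laplace expansion of $Z_\pi = \int e^{-R/\tau}\,\dd\theta$ around $\theta_0$ supplies the $-\log\det\nabla^2 R(\theta_0)$ part of $K$ along with the exponent $(R(\theta^\star)-R(\theta_0))/\tau$. A parallel pushforward calculation gives $\mathbb{E}_{\rho_\gamma} L_n \sim mn\gamma/2$, so $-\gamma^{-1}\mathbb{E}_{\rho_\gamma} L_n$ contributes a clean $-m/2$, and Laplace on the prior concentrated at $\theta_0$ yields $\log\mathbb{E}_\mathcal{D}\zeta_n \sim nP/2$ by the definition of $P$. Substituting $\tau = 2(R(\theta^\star) - R(\theta_0))/(mn)$ turns $(R(\theta^\star)-R(\theta_0))/\tau$ into $mn/2$ and $\log\tau$ into $\log(R(\theta^\star)-R(\theta_0)) - \log(mn/2)$, which combine to produce the $m\log(R(\theta^\star)-R(\theta_0))$ and $m(1 - \log(mn/(2\pi)))$ terms of the bound; the higher-order Laplace remainders collect into $\mathcal{O}(n^{-2}) + \mathcal{O}(\gamma)$, and Markov's inequality supplies the $n^{-1}\log(\delta^{-2})$ term.

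The hardest part will be rigorously controlling the nested Laplace expansion, in which the outer Gaussian integral in $u$ and the inner expansion on $\mathcal{M}$ interact through the coarea Jacobian $\det(DFDF^\top)^{-1/2}$; this interaction is precisely why $S$ enters the bound as a volume-element contribution rather than as intrinsic curvature of the loss, and it is where the Bayesian duality does its essential work. Assumptions (A)--(C) would justify integrability and the coarea change of variables, (D) ensures nondegeneracy of the manifold Laplace expansion, (E) controls prior tails via the polynomial growth bound on $R$, and (F)--(G) keep the normalizing constants and $R(\theta^\star)$ bounded in $n$ so that the two successive limits $\gamma \to 0^+$ and $n \to \infty$ can be taken with uniform control over the remainder terms.
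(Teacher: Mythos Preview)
Your proposal is correct and follows essentially the same route as the paper: start from the core PAC-Bayes bound, rewrite $\kl(\rho_\gamma\Vert\pi)$ via the Gibbs-posterior identity, evaluate $-\log Z_\gamma$ by the coarea/Laplace (``Bayesian duality'') machinery of \cite{hodgkinson2023interpolating}---which the paper simply invokes as \cite[Theorem~1]{hodgkinson2023interpolating} rather than re-deriving---and handle $\zeta_n$ by a Laplace approximation of the prior at $\theta_0$, then substitute the optimizing $\tau$. The only substantive difference is that the paper bounds $-\gamma^{-1}\mathbb{E}_{\rho_\gamma}L_n\le 0$ and discards it, whereas you evaluate it asymptotically (your ``clean $-m/2$'' after dividing by $n$); this would tighten the constant slightly but still implies the stated inequality, so the theorem follows either way.
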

In Theorem \ref{thm:PAC}, the temperature $\tau$ is chosen so as to minimize the bound, excluding higher-order terms. The bound (\ref{eq:PACBound}) has a similar interpretation to the IIC in \cite{hodgkinson2023interpolating} and so most of the discussion there is also relevant here. Indeed, in terms of the IIC in (\ref{eq:IICOrig}), the bound (\ref{eq:PACBound}) becomes
\[
2 \mathbb{E}_{\theta \sim \rho_\gamma} L(\theta) \leq m \cdot (1 + \log(2\pi) + \text{IIC}) + P + n^{-1} \log(\delta^{-2}) + \mathcal{O}(n^{-2}) + \mathcal{O}(\gamma).
\]

\section{Discussion and Conclusions}

A PAC-Bayesian bound is presented in Theorem \ref{thm:PAC} for interpolators in the overparameterized regime, using the results of the IIC \cite{hodgkinson2023interpolating}. Our bound is quite general, imposing few restrictions on the model and the form of its implicit regularization. This is particularly advantageous in the setting of deep learning, where the precise nature of the model and the training process is often complex. 

Drawing particular attention to the factor $S$, recall sharpness of the loss landscape is typically quantified in terms of the Hessian of the loss \cite{yao2018hessian}. Multiple examinations have reported limitations to sharpness metrics computed involving the Hessian \cite{SHS2023,ACMHF2023}. One possibility for this deficiency is that the entire spectrum of the Hessian (and not only the top part) matters. The log-determinant depends not only on the largest eigenvalue, but on the decay rate of \emph{all} the eigenvalues as well. However, for large neural networks, the Hessian is almost inevitably singular, and so its log-determinant is undefined \cite{zhang2021modern,wei2022deep}. Our presented form of $S$ has no such issues, and its relation to the Hessian is well studied \cite{SHS2023,singh2021analytic,liao2021hessian}. This representation of sharpness should prove valuable in further explorations of the correlation between the eigenspectra and test performance as seen in heavy-tailed self-regularization theory \cite{mahoney2019traditional,martin2020heavy,martin2021predicting} and other linearized analyses \cite{bartlett2020benign,RKQW2023,agrawal2022alpha}.

Finally, we remark that in view of the vast literature investigating implicit regularization of stochastic optimizers \cite{neyshabur2017implicit,bietti2019kernel,smith2020origin}, the form of the regularizer $R$ for neural network interpolators is a fertile ground for future research. %

\section*{Acknowledgments}
FR was partially supported by the Australian Research Council through an Industrial Transformation Training Centre for Information Resilience (IC200100022).

\clearpage

\appendix

\section{Appendix}
\label{sxn:appendix}

\begin{proof}[Proof of Theorem \ref{thm:PAC}]

Let $J(\theta) = DF(\theta) DF(\theta)^\top \in \mathbb{R}^{mn\times mn}$ denote the empirical NTK. We start from the core PAC-Bayes bound (\ref{eq:PACBayesCore})
\[
\mathbb{E}_{\rho}L(\theta)\leq\mathbb{E}_{\rho}L_{n}(\theta)+\frac{\log \mathbb{E}_{\mathcal{D}} \zeta_n}{n}+\frac{\kl(\rho\Vert\pi)}{n}+\frac{\log(1/\delta)}{n},
\]
where dummy variables in the expectations have been dropped for brevity. First, since $L_n(\theta) = 0$ on $\mathcal{M}$, $\mathbb{E}_\rho L_n(\theta) = \mathcal{O}(\gamma)$. Next, taking $\pi(\theta) \propto \exp(-\frac1\tau R(\theta))$, applying Laplace's method twice, 
\begin{align*}
\zeta_{n}&=\int_\Theta e^{n(L(\theta)-L_{n}(\theta))}\pi(\theta)\dd\theta=\frac{\int_\Theta e^{n(L(\theta)-L_{n}(\theta))}e^{-\frac{1}{\tau}R(\theta)}\dd\theta}{\int_\Theta e^{-\frac{1}{\tau}R(\theta)}\dd\theta}\\&=e^{n(L(\theta_{0})-L_{n}(\theta_{0}))}+\mathcal{O}(\tau),
\end{align*}
and so $\log\mathbb{E}_{\mathcal{D}}\zeta_{n}=\frac{1}{2}nP+\mathcal{O}(\tau)$. 
Observe that
\begin{align*}
\kl(\rho\Vert\pi) &= \int_{\mathbb{R}^d} \log\left(\frac{\rho(\theta)}{\pi(\theta)}\right)\rho(\theta)\dd\theta\\
&= \frac{1}{\mathcal{Z}_\gamma} \int_{\mathbb{R}^d} \log\left(\frac{\pi(\theta)e^{-\frac{n}{\gamma}L_{n}(\theta)}}{\pi(\theta)\mathcal{Z}_{\gamma}}\right)\pi(\theta)e^{-\frac{n}{\gamma}L_{n}(\theta)}\dd\theta\\
&= \frac{1}{\mathcal{Z}_{\gamma}}\int_{\mathbb{R}^d}\left(-\log\mathcal{Z}_{\gamma}-\frac{n}{\gamma}L_{n}(\theta)\right)\pi(\theta)e^{-\frac{n}{\gamma}L_{n}(\theta)}\dd\theta\\
&=-\log\mathcal{Z}_{\gamma}-\frac{n}{\gamma}\mathbb{E}_{\rho}L_{n}(\theta)\\
&\leq-\log\mathcal{Z}_{\gamma}.
\end{align*}
From the proof of \cite[Theorem 1]{hodgkinson2023interpolating},
\[
-\log \mathcal{Z}_\gamma = \frac{1}{\tau} [R(\theta^\star) - R(\theta_0)] + \frac{mn}{2} \log (\pi\tau) + \frac{S+K}{2}+ \mathcal{O}(\tau). 
\]
In line with \cite{hodgkinson2023interpolating}, choosing $\tau = \frac{2}{mn} [R(\theta^\star) - R(\theta_0)]$, since $R(\theta^\star) = \mathcal{O}(1)$, $\tau = \mathcal{O}(n^{-1})$ and 
\[
-\log \mathcal{Z}_\gamma = \frac{mn}{2}\left(1 + \log\frac{2\pi}{mn} + \log [R(\theta^\star) - R(\theta_0)]\right) + \frac{S+K}{2} + \mathcal{O}(n^{-1}). 
\]
Altogether,
\begin{align*}
\mathbb{E}_\rho L(\theta) &\leq \frac{P}{2} -\frac{\log \mathcal{Z}_\gamma}{n} + \frac{\log(1/\delta)}{n} + \mathcal{O}(\tau) + \mathcal{O}(\gamma)\\
&\leq \frac{P}{2} + \frac{m}{2}\left(1 + \log\frac{2\pi}{mn}\right) + \frac{m}{2}\log[R(\theta^\star)-R(\theta_0)] \\&\qquad+ \frac{S+K}{2n} + \frac{\log(1/\delta)}{n} + \mathcal{O}(n^{-2}) + \mathcal{O}(\gamma).
\end{align*}

\end{proof}

\end{document}